\newcommand{\cX}{{\mathcal X}}
\newcommand{\uu}{\mathbf{ u}}
\def\XXint#1#2#3{{\setbox0=\hbox{$#1{#2#3}{\int}$ }
\vcenter{\hbox{$#2#3$ }}\kern-.6\wd0}}
\newcommand\norm[1]{\Arrowvert {#1}\Arrowvert}
\newtheorem{theorem}{Theorem}[section]
\newtheorem{definition}[theorem]{Definition}
\newtheorem{example}[theorem]{Example}
\newtheorem{lemma}[theorem]{Lemma}
\newtheorem{remark}[theorem]{Remark}
\title [Segregated  Lipschitz Learning  for Classification]{  Graph-Based Semi-Supervised Segregated  Lipschitz Learning }
\author{Farid Bozorgnia, Yassine Belkheiri, Abderrahim Elmoataz}
\address{CAMGSD, Department of Mathematics, Instituto Superior T\'{e}cnico, Lisbon, Portugal}
\email{faridb.bozorgnia@tecnico.ulisboa.pt}
\address{University of Caen Normandy, GREYC UMR CNRS 6072, France }
\email{ yassine.belkheiri@unicaen.fr}
\address{University of Caen Normandy, GREYC UMR CNRS 6072, France }
\email{ abderrahim.elmoataz@unicaen.fr}
\subjclass[2000]{}
\keywords{Graph-Based, Semi-supervised learning, Infinity Laplacian,  Lipschitz Learning.}
\begin{document}
 
\maketitle

\begin{abstract}

  This paper presents an approach to semi-supervised learning for the classification of data using the  Lipschitz Learning on graphs. We develop a graph-based semi-supervised learning framework that leverages the properties of the infinity Laplacian to propagate labels in a dataset where only a few samples are labeled. By extending the theory of spatial segregation from the Laplace operator to the infinity Laplace operator, both in continuum and discrete settings, our approach provides a robust method for dealing with class imbalance, a common challenge in machine learning. Experimental validation on several benchmark datasets demonstrates that our method not only improves classification accuracy compared to existing methods but also ensures efficient label propagation in scenarios with limited labeled data.  

\end{abstract}
\bigskip

\section{Introduction}

 In recent years, nonlinear Partial Differential Equations (PDEs) on graphs have attracted increasing interest due to their natural emergence in various applications in mathematics, physics, biology, economics, and data science. For example, they are relevant in fields such as Internet and road networks, social networks, population dynamics, image processing, and machine learning. Indeed, a large amount of complex and irregular data is generated daily from various sources, including the internet, images, point clouds, 3D meshes, and biological networks. These datasets can be directly represented or modeled as graphs or functions defined on graphs.

Consequently, intensive research aims to develop new methods for processing and analyzing data defined on graphs and adapt classical signal and image processing methods and concepts to graphs.

Recently, several works have focused on the study of PDEs on graphs and their local or non-local continuous limits in the Euclidean domain. Among the significant contributions, we can mention the works of teams such as those of A. Bertozzi \cite{bertozzi2012, merkurjev2013, garcia2014}, Y. VanGuennip \cite{vanguennip2014, vanguennip2018, budd2021}, S. Osher \cite{osher2019, gilboa2008}, and J. Mazon \cite{mazon2022, mazon2023, mazon2023book}, who have proposed to adapt several types of continuous PDEs and variational models to graphs. Other research aims to transpose various continuous PDEs to graphs, such as \(p\)-Laplacian equations \cite{alaoui2016, elmoataz2015p, hafiene2018nonlocal}, infinity Laplacian \cite{elmoataz2015p}, "game" \(p\)-Laplacian \cite{elmoataz2017game}, Hamilton-Jacobi equations with or without diffusion \cite{fadili2022, ennaji2021, A12}, related to certain stochastic games \cite{elmoataz2017connection, elmoataz2017nonlocal}, mean curvature flow equation, or \(p\)-biharmonic equation \cite{fadili2022}.

\label{sec:1}

In this paper, utilizing the characteristic of Infinity Laplacian in propagating,  we propose a novel graph-based semi-supervised learning method aimed at classifying large volumes of unlabeled data, particularly in scenarios involving imbalanced datasets and limited labeled samples. Our approach exploits the geometric and topological characteristics of the unlabeled data, integrating these properties to enhance the development and performance of various algorithms.



\section{Mathematical preliminaries on Infinity Laplacian }

In this section,  we provide an overview of the infinity Laplacian. The infinity Laplacian equation lies at the crossroads of several mathematical disciplines and is applied in various fields, including optimal transportation, game theory, image processing, computer vision, and surface reconstruction.  It was first explored by Gunnar Aronsson \cite{aronsson1965minimization}, who, motivated by classical analysis, sought to develop Lipschitz extensions of functions. Over the past decade, researchers in PDEs have made significant strides in establishing the existence, uniqueness, and regularity of solutions to this equation. For detailed discussions on the uniqueness of Lipschitz extensions and the theory of absolute minimizers, see \cite{aronsson2004tour, crandall2001optimal}. For recent advancements in the numerical approximation of eigenvalues and eigenfunctions of the Infinity Laplace operator, we refer to \cite{Bozorgnia2024}. Additionally, connections have emerged linking this equation (and the $p$-Laplacian) to continuous values in Tug-of-War games. For further insights and applications related to these equations, refer to \cite{aronsson1967extension,aronsson2004tour, crandall2008visit, crandall2007uniqueness, juutinen2006evolution, juutinen1999eigenvalue, manfredi2012definition, manfredi2012dynamic} and the associated references.

\subsection{Local continuous infinity Laplacian}
The infinity Laplacian operator is defined as follows:
\begin{equation}
\label{eq:infLapOp}
  \Delta_{\infty} u \ = \  (\nabla u)^T D^2u \nabla u  \ = \ \sum_{i,j=1}^{d} \frac{\partial u}{\partial x_i} \frac{\partial u}{\partial x_j}\frac{\partial^2 u}{\partial x_i \partial x_j}.
\end{equation}
The operator in (\ref{eq:infLapOp}) can be normalized by $\frac{1}{|\nabla u|^2}$, e.g., cf. \cite{barron2008infinity}.

A function $u$ is said to be infinity harmonic if it solves the homogeneous infinity Laplacian equation
\begin{equation}
\label{eq:infLapEq}
 \Delta_{\infty} u \ = \ 0.
\end{equation}
in the viscosity sense.  As   mentioned  earlier this equation can be derived as the limit of a sequence of $p$-Laplacian equations 
\[
\Delta_p u = \textrm{div}(|\nabla u|^{p-2}\nabla u) = 0, 
\]
under certain boundary conditions for $p \rightarrow \infty$.

In \cite{Leon2}, the author examines the convergence rate of solutions to the $p$-Laplace equation as they approach solutions of the infinity-Laplace equation. Specifically, the study establishes a convergence rate for solutions to the $p$-Laplace equation as 
$p$ tends to infinity. The main results indicate that this convergence occurs at a rate proportional to 
$p^{\frac{-1}{4}}$   in general cases, and improves to a faster rate of 
$p^{\frac{-1}{2}}$  under certain conditions, such as the presence of a positive gradient.  

The infinity Laplace equation is closely connected to the problem of finding the Absolute Minimal Lipschitz Extension (AMLE) \cite{aronsson1965minimization, aronsson2004tour}. This framework aims to identify a continuous real-valued function that maintains the smallest possible Lipschitz constant on any open set compactly contained within  $\Omega$. This perspective is essential for developing numerical approximation schemes for solving the infinity Laplace equation (\ref{eq:infLapEq}).

We consider two metric spaces $(X, dX)$ and  $(Y,dY )$, which have the isometric extension property.

\begin{definition} Let $\Omega$  be a subset of  $X$ and $f: \Omega\rightarrow Y$ be a Lipschitz function. If  $g$
extends $f$  and  $Lip(g, X) = Lip(f,\Omega)$ then we say that g is a minimal Lipschitz extension (MLE) of  $f$.
\end{definition}

A function $u\in W^{1,\infty}(\Omega)$ is called absolutely minimizing Lipschitz extension of a Lipschitz function $g$ if $u\vert_{\partial\Omega}=g$ and
\[
\norm{\nabla u}_{L^\infty(\Omega')}\leq\norm{\nabla v}_{L^\infty(\Omega')},
\]
for all open sets $\Omega'\subset\Omega$ and all $v$ such that $u-v\in W^{1,\infty}_0(\Omega')$.
The connection between an AMLE and the infinity Laplacian is such that a function  $u\in\mathrm{Lip}(\Omega)$ is an AMLE of a Lipschitz function $g:\partial\Omega \rightarrow \mathbb{R}$ if and only if $u$ is a viscosity solution of the infinity Laplacian equation with $u=g$ on~$\partial\Omega$.

Implementing the infinity Laplacian and related equations often involves significant mathematical challenges, such as the lack of regularity and difficulties in establishing the existence and uniqueness of solutions. From a numerical perspective, these equations can behave quite differently from their linear counterparts, leading to interesting and complex theoretical problems.

 To illustrate the motivation for using the nonlinear Infinity Laplacian operator in semi-supervised learning tasks, let us examine the following example. Let \(\Omega = [-1, 1] \times [-1, 1]\), and let \(u\) be the solution of the problem:

\begin{equation}
\label{eq1}
\left\{
\begin{array}{ll}
  \Delta u = 0  & \text{in} \ \Omega \setminus \{(0, 0)\},\\
  u(0, 0) = 1, \ u = 0  & \text{on} \ \partial \Omega.
\end{array}
\right.
\end{equation}

Next, we replace the Laplace operator with the Infinity Laplacian and solve the corresponding problem. The results are illustrated in Figure \ref{fig:1}. The figure highlights that the standard Laplacian is degenerate in this setting, which leads to less effective label propagation. In contrast, the Infinity Laplacian offers better propagation. 

To demonstrate how the Infinity Laplacian handles imbalanced data more effectively, we solve the problem with the Infinity Laplacian using 10 interior points with a value of +1 and the middle point assigned a value of -1. Figure \ref{fig:2} shows the resulting surface of the solution across the grid, indicating how the Infinity Laplacian distributes the labels more efficiently in such scenarios.

\begin{figure}[h!]
\includegraphics[scale=0.25]{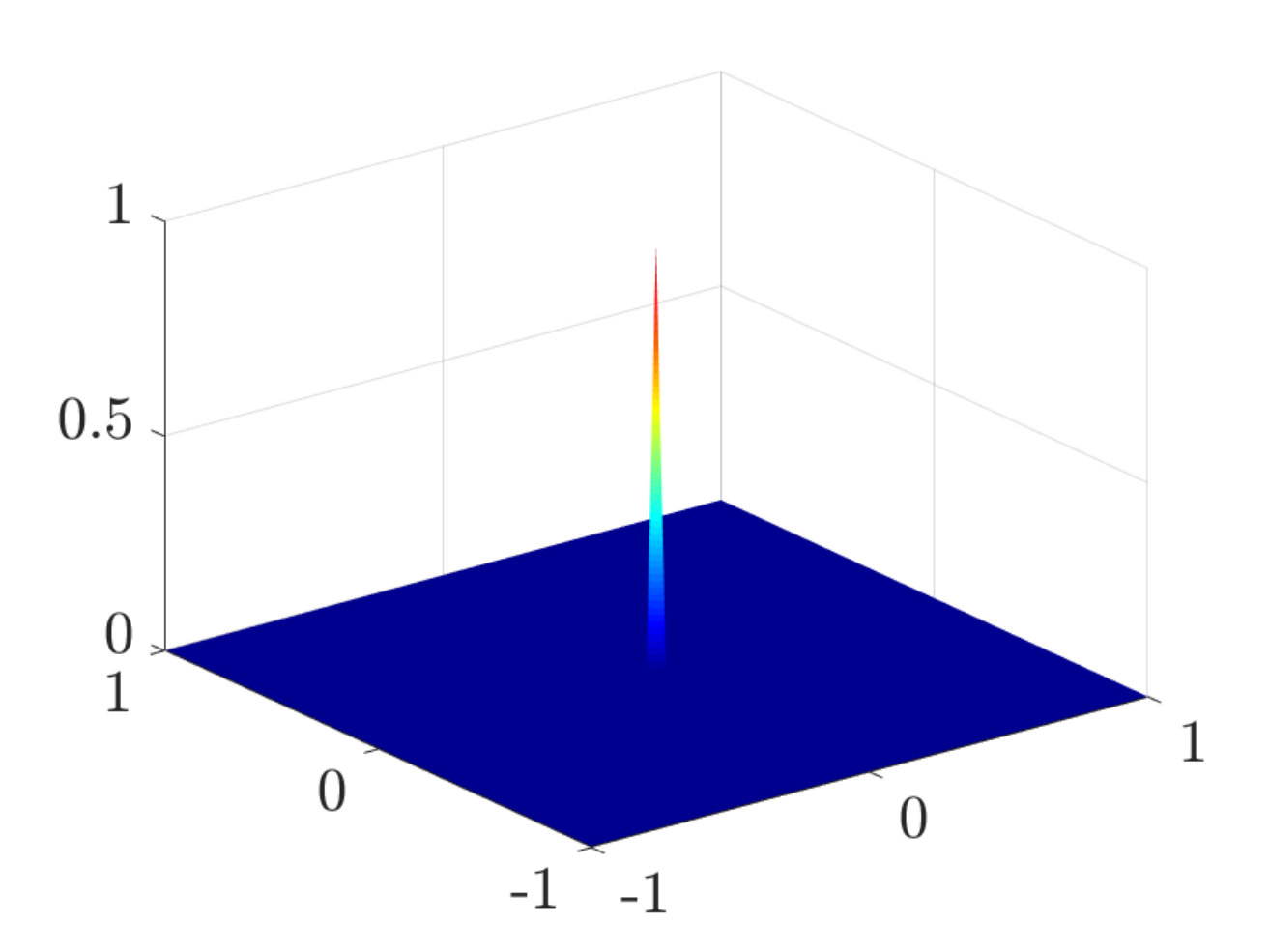}
\includegraphics[scale=0.3]{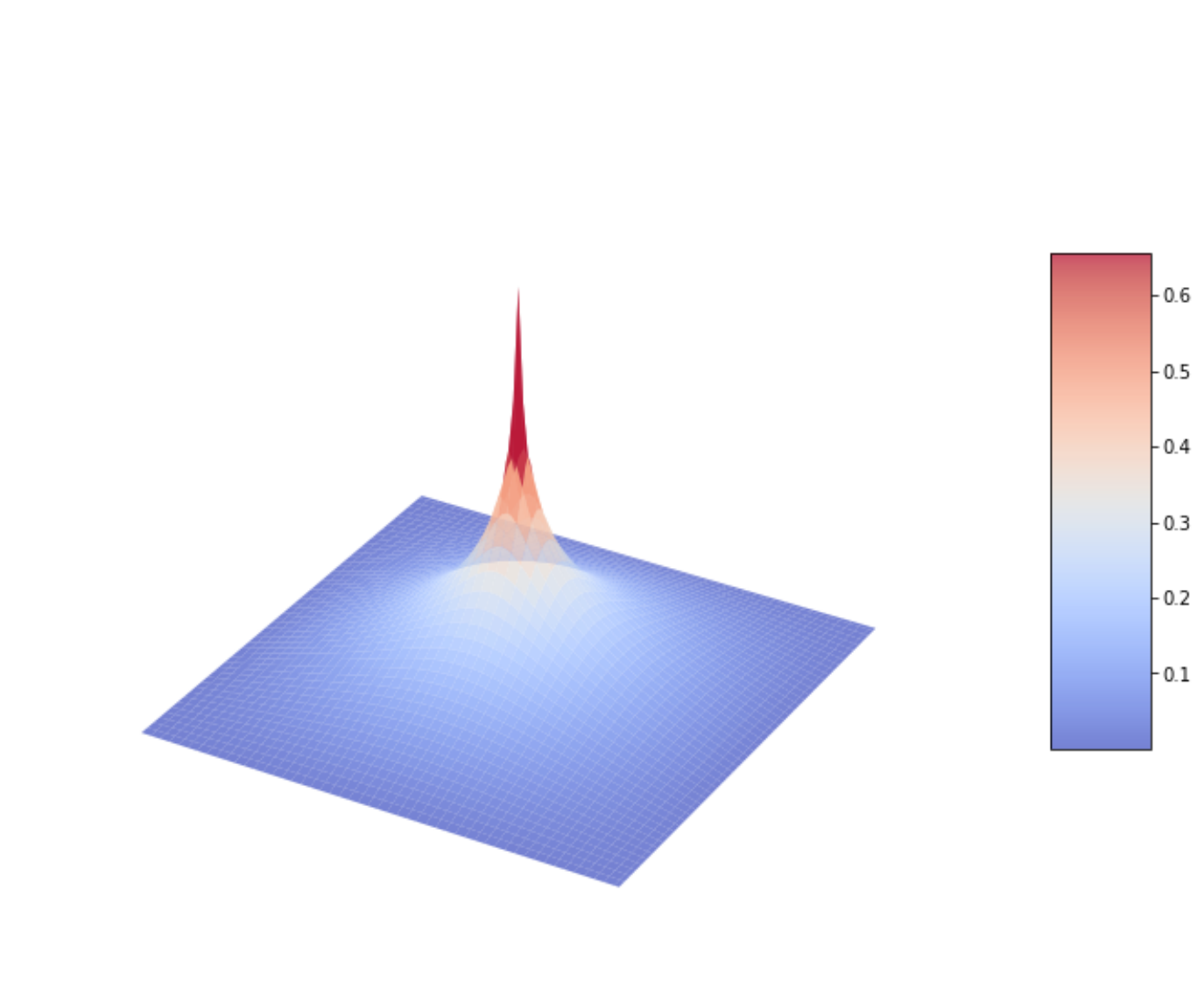}
\includegraphics[scale=0.2]{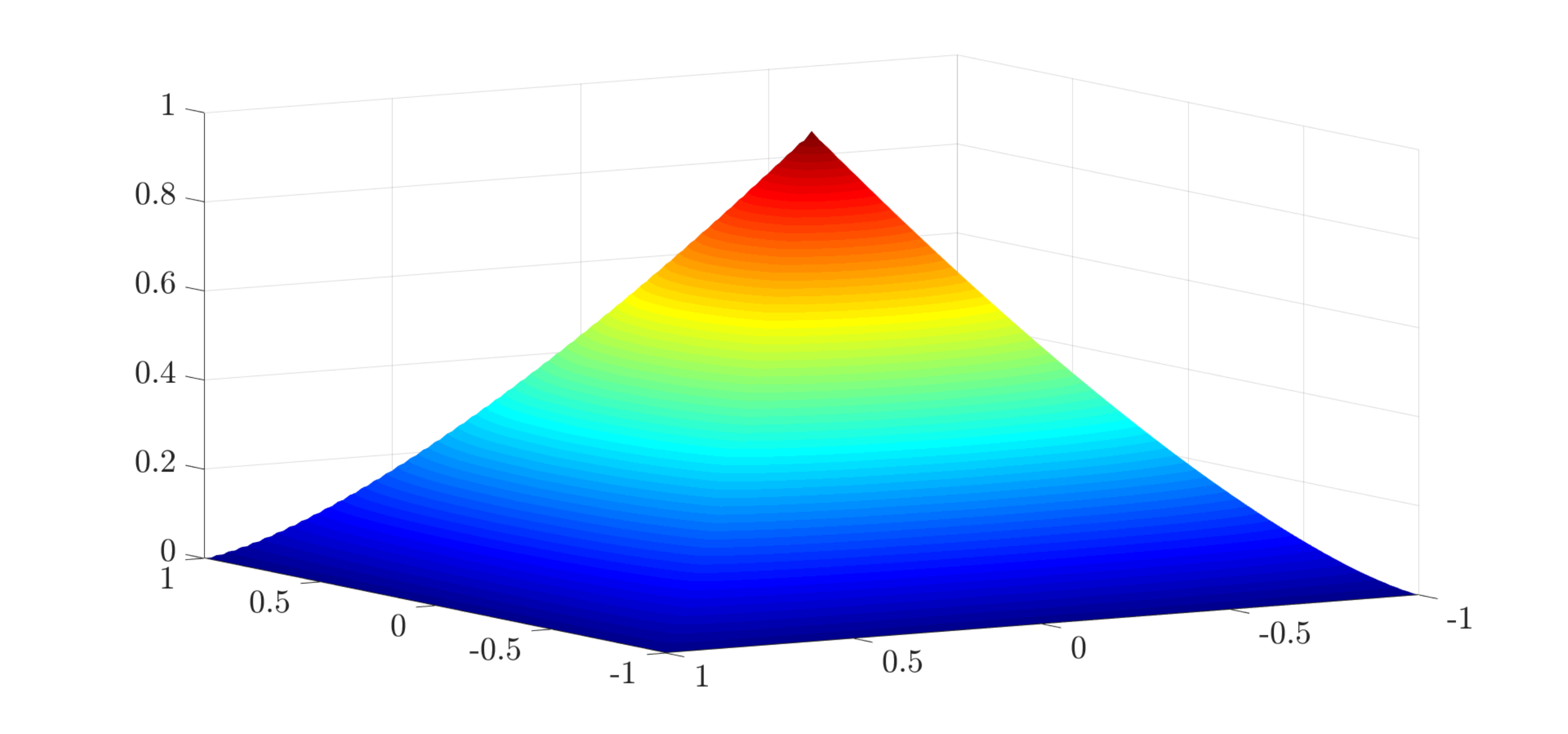}
\caption{The first picture on top-left shows the initial value, the right indicates the solution of Problem \refeq{eq1} and the last picture is the solution of infinity Laplacian with the same boundary condition.}
\label{fig:1}       
\end{figure}
 
\begin{figure}[h!]
\includegraphics[scale=0.35]{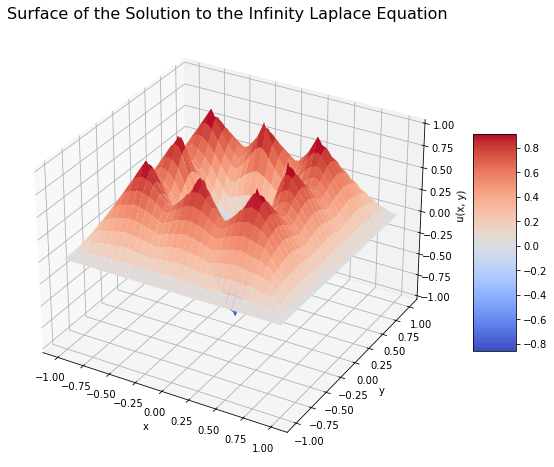}
\includegraphics[scale=0.35]{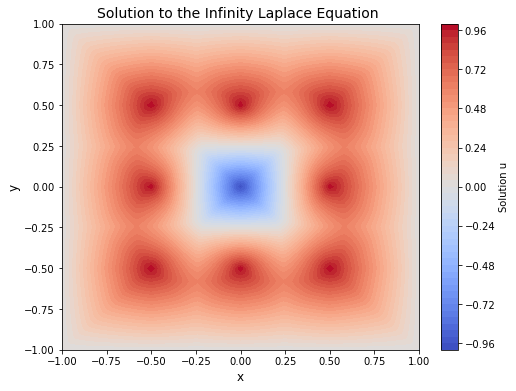 }
\caption{.}
\label{fig:2}       
\end{figure}
We would like to highlight the results from \cite{alaoui2016}, where $p$-Laplacian regularization for semi-supervised learning (SSL) on large, geometric random graphs is studied. The key problem addressed in this work is how to utilize a small set of labeled points on a graph to predict the labels of the remaining unlabeled points by leveraging the smoothness properties of the data.  The authors study the asymptotic behavior of the regularization technique as the number of unlabeled points grows indefinitely.  When $p\le  d$, the solution becomes degenerate and discontinuous, essentially leading to spikes and poor predictions.  For  $p \ge d + 1$, the solution becomes smooth and well-behaved, making it ideal for prediction tasks. For small values of p (closer to 2), the function estimate is highly sensitive to the underlying data distribution. This means the estimate is informed by the density of the unlabeled data, which can improve predictions if the data aligns with the cluster assumption. However, it risks overfitting or discontinuities, especially in higher dimensions. For large values of $p$ (approaching infinity), the estimate tends to be a smooth, Lipschitz continuous solution that is independent of the data distribution. This happens because large values of $p$ lead to the  Absolute  Minimal Lipschitz Extension (AMLE). While this solution is smooth, it can ignore valuable information from the unlabeled data.   
\subsection{Non-local continuous infinity Laplacian}
Chambolle et al. \cite{chambolle2012holder} introduced a Hölder-type infinity Laplacian, which can be viewed as a non-local variant of the traditional infinity Laplacian. Their approach focuses on minimizing the following non-local functional:

\begin{equation}
\label{eq_nonlocal1}
\int_{\Omega \times \Omega} \frac{|u(x_i)-u(x_j)|^{p}}{|x_i-x_j|^{\alpha p}} d x_i d x_j, \quad \text { for } \alpha \in[0,1]
\end{equation}

The corresponding Euler-Lagrange equation for this functional is given by

\begin{equation}
\label{eq_nonlocal2}
\int_{\Omega} \frac{|u(x_i)-u(x_j)|^{p-1}}{|x_i-x_j|^{\alpha}} \frac{\operatorname{sign}(u(x_i)-u(x_j))}{|x_i-x_j|^{\alpha}} d x_j=0 . 
\end{equation}

As $p$ tends to infinity, this equation formally converges to a non-linear, non-local equation $L(u)=0$ on $\Omega$ with

\begin{equation}
\label{eq_nonlocal3}
L(u)=\max _{x_j \in \Omega, x_j \neq x_i}\left(\frac{u(x_j)-u(x_i)}{|x_j-x_i|^{\alpha}}\right)+\min _{x_j \in \Omega, x_j \neq x_i}\left(\frac{u(x_j)-u(x_i)}{|x_j-x_i|^{\alpha}}\right), \quad \text { for } x_i \in \Omega . 
\end{equation}

This operator $L(u)$ is referred to as the Hölder infinity Laplacian.
\[
\]
 
\section{Calculus on Graphs}
\label{sec:2}
In fully supervised learning, a model is trained using labeled data to learn a generalized pattern. In semi-supervised learning, we use both labeled and unlabeled data, taking advantage of the large amount of unlabeled samples. This approach is especially beneficial in situations where labeled data is scarce.  The lack of labeled data can limit a model's ability to learn and generalize effectively. However, by incorporating a larger set of unlabeled samples, the model can uncover additional insights from the data, enhancing performance on tasks with few labeled examples.

Graph-based semi-supervised learning methods are a common and important class of semi-supervised learning techniques. These methods represent data as connectivity graphs, capturing the entire dataset's structure, including labeled and unlabeled samples. For a comprehensive review of existing methods, refer to \cite{chong2020graph}.

Let the  data set consist of $n$ samples
 $X={\{ x_1,  x_2, \cdots ,x_n }\} \subset \mathbb{R}^d$,    we assume there is a subset of the nodes $\Gamma={\{ x_1, \cdots, x_m}\}  \subset   X $  that their labels are given, where $m \ll n$. In graph-based semi-supervised learning, we aim to extend labels to the rest of the vertices $\{x_{m+1}, \cdots, x_n\}$.

A common approach in semi-supervised learning involves utilizing unlabeled data by constructing a graph across the dataset. Given a set of samples $X$, which includes a few labeled samples (without their labels $Y$) and a large amount of unlabeled data, we build a \textit{K-Nearest Neighbor} (K-NN) graph over $X$. 
 
The next step involves creating an adjacency/weight matrix, denoted by  $W$, over the constructed K-NN graph over data $X$. This matrix captures the similarities between pairs of data points. If the data set consists of $n$ samples
    then the weight matrix $W$ is an $n \times n $  symmetric matrix, where the element $w_{ij}$  represents the similarity between two samples  $x_i$  and $x_j$. The similarity is always non-negative and should be large when  $x_i$  and $x_j$ are close together spatially, and small (or zero), when $x_i$  and $x_j$  are far apart. The degree of a vertex $x_i$ is given by $d(x_i)=\sum\limits_{j\sim i} w_{ij}$.  
    
Here,  we review the definitions of $p$-Laplace and  Infinity Laplacian on the graph given in \cite{DEL, Abde1, Abde2, elmoataz2015p, elmoataz2017nonlocal}.   The weighted $p$-Laplace  operator of a function $u \in H(V)$ is defined by
\[
\Delta_{w,p} u(x_i)=\sum_{j\sim i} w_{ij}^{\frac{p}{2}}|u(x_i)-u(x_j)|^{p-2} (u(x_i)-u(x_j)).
\]
 The $p$-Laplace learning   for  large values of $p > d,$( where 
 $d$ is the dimension of ambient space) \cite{alaoui2016}  involves solving the nonlinear graph $p$-Laplacian equation:
\begin{equation}
\label{p-learning}
\left\{
\begin{array}{ll}
  \Delta_{w,p} u(x_i) = 0  &  x_i \in  X\setminus\Gamma,\\
    u(x_i) =  y_i   &  x_i \in  \Gamma.
\end{array}
\right.
\end{equation}
Note by the Sobolev embedding Theorem for  $ p> d$ the H\"{o}lder continuity of solution allows to define the boundary values.

The $L_p$ norm of the gradient for $u \in H(V)$ at a vertex of the graph is defined by
\[
\| \nabla_{w} u(x_i)\|_{p}=[\sum_{j\sim i} w_{ij}^{\frac{p}{2}}|u(x_i)-u(x_j)|^{p}]^{\frac{1}{p}} , \, 0<p<+\infty.
\]
By considering the upwind method to discretize the gradient on the graph we have
\[
\| \nabla_{w}^{\pm} u(x_i)\|_{p}^{p}=\sum_{j\sim i} w_{ij}^{\frac{p}{2}}((u(x_i)-u(x_j))^{\pm} )^{p} , \, 0<p<+\infty.
\]
For the $L_\infty$-norm we have
\[
\|\nabla_{w}^{\pm} u(x_i)\|_{\infty}=\max_{j \sim i}\, \left(\sqrt{w_{ij}}\, \left(u(x_i)-u(x_j)\right)^{\pm}\right).
\]
The Infinity Laplacian   is defined by 
\[
 \Delta_{w,\infty} u(x_i):= \frac{1}{2}\left[\|\nabla^{-}_{w} u(x_i)\|_{\infty}- \|\nabla^{+}_{w} u(x_i)\|_{\infty} \right].
\]


\subsection{Infinity Laplace equation}
To solve the infinity Laplacian on a graph, we use two methods. The first is an approximation of the infinity Laplacian using the Minimal Lipschitz Extension, and the second is solving the evolutionary infinity Laplace equation \cite{abderrahim2014nonlocal}.
\subsubsection{Minimal Lipshitz extension}
 As in the previous section,  for given a graph \(G = (V, E)\) with a set of vertices \(V = \{x_1, x_2, \dots, x_n\}\) and a weight matrix \(W\) representing edge weights (or similarities), the Infinity Laplacian  denoted by \(\Delta_{w,\infty} u\) or  \(\Delta_{\infty} u\) at a vertex \(x_i\) is defined as:
\[
\Delta_{w,\infty} u(x_i) = \frac{1}{2}\left[\max_{j \sim i}\left( \sqrt{w_{ij}}\max\left(u(x_i)-u(x_j), 0\right)\right) +\max_{j \sim i}\left( \sqrt{w_{ij}}\max\left(u(x_j)-u(x_i), 0\right)\right)\right],
\]
where \( j \sim i \) indicates that \(x_j\) is a neighbor of \(x_i\), and \( w_{ij} \) is the weight (or similarity) between vertices \(x_i\) and \(x_j\).

To solve the Infinity Laplace equation \( \Delta_{w,\infty} u = 0 \) on a graph,  we aim to find a function \(u: V \to \mathbb{R}\) that satisfies the equation at each vertex, subject to given boundary conditions (e.g., fixed values of \(u\) at certain labeled vertices).

Define the graph structure \(G\), including the vertices \(V\), edges \(E\), and the weight matrix \(W\). Partition the vertex set \(V\) into labeled vertices \( \Gamma \) (where labels are known) and unlabeled vertices \( V \setminus \Gamma \) (where labels are to be predicted). Assign known labels to the vertices in \( \Gamma \). Initialize values for \( u(x_i) \) at the unlabeled vertices \( x_i \in V \setminus \Gamma \). This could be random, zero, or based on some heuristic (such as averaging over neighbors).

Use an iterative scheme to update the values of \(u(x_i)\) for the unlabeled vertices. A common approach is to use fixed-point iteration or Gauss-Seidel iteration, where the update rule at each iteration \(k+1\) is given by minimizing the discrete Lipschitz constant at each vertex.

One defines a discrete Lipschitz constant $L(u_i)$ of $u$ in $x_i$ for $i\in\{1,\dots,n\}$ as
\[
L(u_{i})= \max_{j\in N_i} \frac{|u_i-u_j|}{d_{ij}},
\]
where $d_{ij}= \|x_i- x_j \| $ distance between node $x_i$ and $x_j$. In \cite{phan2015extensions}[Theorem 4.3.1]  it was proved that the minimizer of this discrete Lipschitz constant with respect to $u_i$ is given by
\begin{align*}
    u^*_i=\text{argmin}_{u_i} L(u_i)=\frac{d_{is}u_{r}+d_{ir}u_{s}}{d_{ir}+d_{is}},
\end{align*}
where the indices $r,s\in N_i$ are chosen such that
\begin{align}\label{eq:idx_max_inf_lapl}
(r,s)\in\arg\max_{k,l\in N_i}{\left\{ \frac{|u_{k}-u_{l}|}{d_{ik}+d_{il}}\right\}}.    
\end{align}
Thus to update the value of \( u(x_i) \)  we use  the following iterative scheme:
\begin{equation} \label{L}
u^{*(k+1)}_{i} = \frac{d_{is} u_r^{(k)} + d_{ir} u_s^{(k)}}{d_{ir} + d_{is}}.
\end{equation} 
 We repeat the iterative update process until the changes in \( u(x_i) \) between successive iterations fall below a predefined tolerance level.

\subsubsection{Infinity Laplacian \cite{abderrahim2014nonlocal}}
In this part, we present an alternative approach to solve the following problem:
\begin{equation} \label{eq:1}
\left \{
    \begin{array}{l c l}
        \displaystyle \Delta_{w,\infty} u(x_i) = 0 && x \in  V\setminus \Gamma, \\
        \displaystyle u(x_i) = g(x_i) && x_i \in \Gamma.
    \end{array}
\right.
\end{equation}
To solve (\ref{eq:1}) iteratively, we make use of the associated evolution equation problem:
\begin{equation} \label{eq:evolution}
\left \{
    \begin{array}{l c l}
        \displaystyle \frac{\partial}{\partial t} u(x_i,t) = \Delta_{w,\infty} u(x_i, t) && x_i \in X\setminus \Gamma, \\
        \displaystyle u(x_i,t) = g(x_i) && x_i \in \Gamma,  \\
        \displaystyle u(x_i, t = 0) = u_{0}(x_i) && x_i \in X.
    \end{array}
\right.
\end{equation}
Then, using an explicit forward Euler time discretization: 
\[
\frac{\partial u}{\partial t} (x_i, t) = \frac{u^{n+1}(x_i) - u^{n}(x_i)}{ \Delta t} \]
with \( u^{n}(x_i) = u(x_i, n \Delta t) \), we have the following iterations:
\begin{equation} \label{eq:2}
\left \{
    \begin{array}{l c l}
        \displaystyle u^{n+1}(x_i) = u^{n}(x_i) + \Delta t \Delta_{w,\infty} u^n(x_i) && x_i \in X\setminus \Gamma,\\
        \displaystyle u^{n+1}(x_i) = g(x_i) && x_i \in \Gamma,\\
        \displaystyle u^{0}(x_i) = u_{0}(x_i) && x_i X.
    \end{array}
\right.
\end{equation}


\section{Related Works on Graph-Based Semi-Supervised Learning}

Early work on classification using graph-based semi-supervised learning traces back to Zhu et al. \cite{A19, A1} and Belkin et al. \cite{A3}. They transformed learning tasks on sets of vectors into graph-based problems by identifying vectors as vertices and constructing graphs that capture the relationships among data points. However, one limitation of this approach, as noted by Nadler et al. \cite{A18}, is that when the number of nodes increases while the number of labeled samples remains fixed, almost all values of the Laplacian minimizer converge to the mean of the labels on the unlabeled samples.

In the Laplace learning algorithm \cite{A18, A19}, the labels are extended by finding the solution  \( \uu: X \to \mathbb{R}^k \) for the following problem:
\begin{equation}
\label{eq:qq1}
\left \{
\begin{array}{ll}
 \mathcal{L} \uu(x_i)=0,  &  x_i\in X \setminus \Gamma, \\
  \uu=g,   &   \text{on } \Gamma, \\
  \end{array}
\right.
\end{equation}
Here  the unnormalized graph Laplacian \( \mathcal{L} \) of a function \( v \in \ell^2(X) \) is defined as:
\[
\mathcal{L} v(x_i):= \sum_{y \in X} w_{ij}(v(x_i) - v(x_j)), 
\]
where \( w_{ij} \) represents the similarity between data points \(x_i\) and \(x_j\)
Let \( \uu=(u_1, \dots, u_k) \) be a solution of \eqref{eq:qq1}. The label of node \( x_i \in X \setminus \Gamma \) is determined by:

\[
\underset{j \in \{1, \dots, k\}}{\textrm{arg max}} \, u_j(x_i).
\]

In \cite{A8}, a scheme called Poisson Learning was proposed. This method modifies equation \eqref{eq:qq1} by replacing the zero values on the right-hand side with the label values of the training points, thus solving the Poisson equation on the graph. The method extends labels from a discrete set \( \{x_i: i = 1, \dots, l\} \) to the rest of the graph's nodes by solving the following system:
\begin{equation}\label{eq:state13}
\left \{
\begin{array}{ll}
\mathcal{L} \uu(x_i) = y_i - \overline{y}   &  1 \le i \le l, \\
\mathcal{L} \uu(x_i) = 0  &  l + 1 \le i \le n, \\
\end{array}
\right.
\end{equation}
with the additional condition:
\[
\sum_{i=1}^{n} d(x_i) \uu(x_i) = 0,
\]
where \( \overline{y} = \frac{1}{l} \sum_{i=1}^{l} y_i \) is the average label vector.

In \cite{A10}, the game-theoretic \( p \)-Laplacian for semi-supervised learning on graphs was studied. It was shown that the approach is well-posed in the limit of finite labeled data and infinite unlabeled data. The continuum limit of graph-based semi-supervised learning using the game-theoretic \( p \)-Laplacian converges to a weighted version of the continuous \( p \)-Laplace equation. Additionally, the study demonstrated that solutions to the graph \( p \)-Laplace equation are approximately Hölder continuous with high probability.

In \cite{A12} the consistency of Lipschitz learning on graphs in the limit of infinite unlabeled data and finite labeled data has been studied.   It has been shown in the case of a random geometric graph with kernel-based weights, that  Lipschitz learning is well-posed in this limit but insensitive to the distribution of unlabeled data. Furthermore, on a random geometric graph with self-tuning weights, Lipschitz learning becomes highly sensitive to the distribution of the unlabeled data. In both cases, the results stem from showing that the sequence of learned functions converges to the viscosity solution of an Infinity Laplace-type equation and analyzing the structure of the limiting equation.

The work \cite{BCR} provides a thorough analysis of Lipschitz learning, particularly examining the convergence rates of solutions to the graph infinity Laplace equation towards the continuum case as the graph density increases. This work is significant for graph-based semi-supervised learning, where labels are propagated from a small labeled set to a larger unlabeled dataset by solving an equation on the graph resembling the infinity Laplacian. Their results also demonstrate that even in sparse graphs—such as those commonly used in semi-supervised learning—convergence to continuum-based infinity Laplace solutions (or absolutely minimizing Lipschitz extensions) is achievable under general assumptions. They use  a comparison with distance functions on the graph, which allows convergence rates even at low connectivity thresholds, making the approach relevant to practical graph applications.

For algorithms related to Lipschitz learning, see \cite{K17}. A graph-based semi-supervised learning approach using the theory of spatial segregation of competitive systems is discussed in \cite{bozrgnia2023graph}.
\section{Infinity Segregation}
 We consider the minimization of the following functional for large values of \( p \):
\begin{equation}\label{continuous-model}
\left\{
\begin{split}
&\min J(\mathbf{u}) := \frac{1}{p} \int_\Omega \sum\limits_{i=1}^k |\nabla u_i|^p \, dx, \\
&\text{subject to:}\\ 
&  u_i \geq 0, \quad \text{and} \quad u_i \cdot u_j = 0 \quad \text{in } \Omega,\\[5pt]
& u_i = g_i \quad \text{on } \partial \Omega.
\end{split}
\right.
\end{equation}

First, we study a simpler case where the number of components is \( k = 2 \). Let the pair \( (u_1, u_2) \) be the minimizer. It is easy to check that \( u = u_1 - u_2 \) satisfies the following equation:

\begin{equation}
\label{eq:qq3}
\left\{
\begin{array}{ll}
- \Delta_p u = 0 & \text{in } \Omega, \\
u = g_1 - g_2 & \text{on } \Gamma.
\end{array}
\right.
\end{equation}
This result follows from the fact that 
\[
|\nabla u|^p = |\nabla u_1|^p + |\nabla u_2|^p.
\]
Additionally, we have \( u_1 = \max(u, 0) \) and \( u_2 = \max(-u, 0) \). Next, as \( p \to \infty \) in  \ref{eq:qq3} we obtain:

\begin{equation}
\label{eq:qq4}
\left\{
\begin{array}{ll}
- \Delta_\infty u = 0 & \text{in } \Omega, \\
u = g_1 - g_2 & \text{on } \Gamma.
\end{array}
\right.
\end{equation}

This implies that for the binary classification we are solving the infinity Laplace equation with boundary labels  \( g_1, g_2 \in \{\pm 1\} \). The support of \( u_1 \) corresponds to the first class, and the support of \( u_2 \) corresponds to the second class.

In the rest, we assume that  $k>2$. We  define
\begin{align*}
    \hat{u}_{i}=u_{i}-\sum_{j\neq i}u_{j}.
\end{align*}
\begin{lemma}\label{uniqu5}
Let $U=(u_{1},\ldots,u_{k})$ be the minimizer  and $\Omega_{1},\ldots,\Omega_{k}$ be the corresponding supports then the following differential inequalities hold in $\Omega$
\begin{enumerate}
  \item $-\Delta_{\mathbf{p}}u_{i}\leq 0$
  \item $-\Delta_{\mathbf{p}}\hat{u}_{i}\geq 0$
\end{enumerate}
\end{lemma}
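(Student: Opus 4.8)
The plan is to derive both inequalities from the variational characterization of the minimizer $U=(u_1,\ldots,u_k)$ of the functional $J$ in \eqref{continuous-model}, using one-sided perturbations that respect the constraints. For the first inequality, I would fix an index $i$ and consider perturbations of the form $u_i \mapsto u_i + t\varphi$ with $\varphi \geq 0$, $\varphi \in W_0^{1,\infty}(\Omega)$ (so the boundary data is untouched) and $t>0$ small. Such a perturbation keeps $u_i$ nonnegative; the issue is that it may violate the disjointness $u_i u_j = 0$. The standard fix in spatial segregation theory is to compensate by simultaneously \emph{decreasing} the other components on the region where $\varphi$ is being added, e.g. replacing $u_j$ by $(u_j - t\varphi)^+$ for $j\neq i$, or more simply to test with $\varphi$ supported in the interior of $\Omega_i = \{u_i > 0\}$ first and then pass to the general support by approximation/regularity of the free boundary. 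Writing out $\frac{d}{dt}\big|_{t=0^+} J \geq 0$ and using that on $\Omega_i$ the other gradients vanish, one gets $\int_{\Omega} |\nabla u_i|^{p-2}\nabla u_i \cdot \nabla \varphi \, dx \geq 0$ for all admissible $\varphi \geq 0$, which is exactly the weak form of $-\Delta_p u_i \leq 0$.

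For the second inequality, the natural object is $\hat{u}_i = u_i - \sum_{j\neq i} u_j$. Because the supports are disjoint, $\hat{u}_i$ coincides with $u_i$ on $\Omega_i$ and with $-u_j$ on each $\Omega_j$, $j\neq i$, so $\nabla \hat u_i$ agrees (up to sign) with the gradient of the single active component at a.e.\ point, and in particular $|\nabla \hat u_i|^p = \sum_{\ell} |\nabla u_\ell|^p$ pointwise a.e. — the same algebraic identity used in the $k=2$ case for $u = u_1 - u_2$. The claim $-\Delta_p \hat u_i \geq 0$ should follow by testing with $\varphi \leq 0$: decreasing $\hat u_i$ means decreasing $u_i$ on $\Omega_i$ (allowed, stays $\geq 0$, and only shrinks the support) or increasing some $u_j$ on $\Omega_j$; either way one produces a competitor, and the first-order optimality gives the sign of $\int |\nabla \hat u_i|^{p-2}\nabla \hat u_i \cdot \nabla \varphi$. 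Alternatively, and perhaps more cleanly, one writes $\hat u_i = u_i - v_i$ where $v_i = \sum_{j\neq i} u_j$, notes that $(u_i, v_i)$ is itself a minimizer of the two-phase problem with the induced boundary data (since lumping the competitors together does not change the disjointness constraint or the energy, by the gradient identity), and then invokes precisely the $k=2$ analysis already carried out above \eqref{eq:qq3}: there $u = u_1 - u_2$ was shown to be $p$-harmonic, and here the asymmetry (we only have an inequality, not an equation) comes from the fact that $v_i$ is \emph{not} free to vary arbitrarily — it is constrained to remain a sum of $k-1$ mutually disjoint nonnegative functions — so only perturbations decreasing $\hat u_i$, i.e.\ those not requiring $v_i$ to split, are admissible, yielding the one-sided conclusion.

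I would organize the write-up as: (1) record the gradient identity $|\nabla \hat u_i|^p = \sum_\ell |\nabla u_\ell|^p$ and $|\nabla u|^p = \sum_\ell |\nabla u_\ell|^p$ for $u=u_1-u_2$ when $k=2$, with a short a.e.\ pointwise argument using that at a.e.\ point at most one $u_\ell$ is positive (hence at most one has nonzero gradient on the open support, and the gradients vanish a.e.\ on the common zero set by the standard fact $\nabla f = 0$ a.e.\ on $\{f=0\}$); (2) construct, for a given test function of the correct sign, an admissible competitor and compute the first variation; (3) conclude the weak differential inequalities. The cleanest route for (1)--(2) is to reduce to the two-phase setting via $(u_i, v_i)$.

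\textbf{Main obstacle.} The delicate point is the admissibility of the competitors near the free boundaries $\partial \Omega_i$: a perturbation $u_i + t\varphi$ with $\varphi$ not compactly supported in $\Omega_i$ will generically destroy $u_i u_j = 0$, so one must either (a) subtract the perturbation off the neighbors simultaneously and check this does not increase the energy — which is where the gradient identity and the sign of the perturbation are used — or (b) first prove the inequality with $\varphi$ supported strictly inside $\Omega_i$ and then remove this restriction, which needs some control on the measure/capacity of $\partial\Omega_i$ (e.g.\ $|\partial\Omega_i| = 0$, a known regularity fact for these segregation limits). I expect handling this free-boundary issue carefully — rather than the first-variation computation, which is routine once a valid competitor is in hand — to be the technical heart of the argument.
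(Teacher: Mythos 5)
There is a genuine gap: the perturbation directions are inverted in both parts, and the sign identification of the weak inequalities is wrong. For part (1), $-\Delta_{p}u_{i}\leq 0$ in the distributional sense means $\int_{\Omega}|\nabla u_{i}|^{p-2}\nabla u_{i}\cdot\nabla\varphi\,dx\leq 0$ for every $\varphi\geq 0$ in $C_{0}^{\infty}(\Omega)$, not $\geq 0$ as you assert. The inequality with the correct sign comes from the \emph{downward} truncation $u_{i}\mapsto(u_{i}-\xi\varphi)^{+}$, $\varphi\geq 0$, which is the competitor the paper uses: it is always admissible (nonnegativity, disjointness and boundary data are preserved automatically), and minimality forces the first variation $-\xi\int|\nabla u_{i}|^{p-2}\nabla u_{i}\cdot\nabla\varphi+o(\xi)$ to be nonnegative, i.e.\ subharmonicity. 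Your upward perturbation $u_{i}+t\varphi$ is only admissible inside $\Omega_{i}$, where it merely recovers interior $p$-harmonicity; the content of (1) is the sign of the singular part of $\Delta_{p}u_{i}$ concentrated on $\partial\Omega_{i}$ (think of $u_{1}=x^{+}$ in one dimension, where $\Delta u_{1}=\delta_{0}$), which test functions supported inside $\Omega_{i}$ cannot see and no approximation argument can recover. Your fix of compensating by replacing $u_{j}$ with $(u_{j}-t\varphi)^{+}$ produces an admissible competitor, but its first variation involves all components and what it proves is $\int|\nabla\hat{u}_{i}|^{p-2}\nabla\hat{u}_{i}\cdot\nabla\varphi\geq 0$, i.e.\ statement (2), not statement (1).

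For part (2) the same inversion occurs. The admissible one-sided move is to \emph{increase} $\hat{u}_{i}$: add $t\varphi$ with $\varphi\geq 0$, realized by raising $u_{i}$ and truncating the neighbours, which is always compatible with the constraints; minimality then gives $\int|\nabla\hat{u}_{i}|^{p-2}\nabla\hat{u}_{i}\cdot\nabla\varphi\geq 0$ for all $\varphi\geq 0$, which is $-\Delta_{p}\hat{u}_{i}\geq 0$. Your proposed direction $\varphi\leq 0$ is generally not realizable when $k\geq 3$: the negative part of $\hat{u}_{i}+t\varphi$ would have to be redistributed among the $k-1$ neighbouring components as mutually disjoint $W^{1,p}$ functions, which fails whenever $\{\varphi<0\}$ crosses an interface between two different $\Omega_{j}$'s (the function to be split is strictly positive across the interface); and even where it is realizable it would yield the reverse (subharmonic) inequality for $\hat{u}_{i}$. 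Relatedly, the claim that $(u_{i},\sum_{j\neq i}u_{j})$ minimizes the two-phase problem is unjustified — a two-phase competitor need not split back into $k-1$ admissible components — and if it were true it would make $\hat{u}_{i}$ exactly $p$-harmonic, contradicting the one-sided nature of the statement; the tension you noticed is resolved precisely by identifying the increasing perturbations (not the decreasing ones) as the admissible class. The gradient identity $|\nabla\hat{u}_{i}|^{p}=\sum_{\ell}|\nabla u_{\ell}|^{p}$ that you record is correct and is indeed the ingredient that makes the energy comparison for part (2) work, but the first-variation setup built on it must be redone with the directions as above.
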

 \begin{proof}
We follow the proof in \cite{conti} and we  argue by contradiction, assuming the existence of an index $j$ such that the following inequality holds in a distributional sense
\begin{align*}
    -\Delta_{p} u_{j}>0
\end{align*}

This means that
\begin{align}
    \int_{\Omega}|\nabla u_{j}|^{\mathbf{p}-2}\cdot \nabla u_{j} \nabla \phi>0,~~~\forall
    \phi \in C^{\infty}_{0}(\Omega).
\end{align}
 As a result,  these exists $\phi\geq0$, $\phi\in C^{\infty}_{0}(\Omega)$
such that
\begin{align*}
    \int_{\Omega} |\nabla u_{j}|^{\mathbf{p}-2}\cdot \nabla u_{j}\cdot\nabla \phi \, dx>0.
\end{align*}

Let $\xi>0$ be very small and consider
\begin{align*}
    V_{i}=\begin{cases}
    u_{i}\qquad i\neq  j\\
    [u_{j}-\xi\phi]^{+}  i= j. 
\end{cases}
\end{align*}
From definition of $v_i$ we have  $v_i \cdot v_j = 0$ whenever 
 $i\neq j,$ and  $v_i = \phi_i$  on the boundary of  $\Omega$. 

\begin{align*}
    & J(V)-J(U)=\frac{1}{\bar{\mathbf{p}}}\int_{\Omega}(|\nabla(u_{j}-\xi\phi)^{+}|^{\mathbf{p}}-|\nabla u_{j}|^{\mathbf{p}})dx\\
         &\leq \frac{1}{\mathbf{p}}(\int_{\Omega}|\nabla(u_{j}-\xi\phi)|^{\mathbf{p}}-|\nabla u_{j}|^{\mathbf{p}})\, dx\\
    & \leq -\xi\int_{\Omega}|\nabla u_{j}|^{\mathbf{p}-2}\nabla u_{j}\cdot \nabla\phi \, dx + o(\xi)  \leq 0. 
\end{align*}
Choosing $\xi$ sufficiently small, we obtain
\[
J(V)-J(U)\leq 0.
\]
By assumption, $U$ is the minimizer so $J(U)<J(v)$ is a contradiction with the above inequality. 
Therefore, 
\[
\int_{\Omega}|\nabla u_{j}|^{\mathbf{p}-2}\nabla u_{j}\cdot \nabla\phi \, dx \le 0.
\]

The argument follows the same as the previous part to prove the second part.  Assume  there $i$ such that
\[
 -\Delta_{\mathbf{p}}\hat{u}_{i}\leq 0,
\]
  multiply by test function $\phi\geq 0 $  with compact support at $\Omega_i$. Then
    \[
\int_{\Omega}|\nabla \hat{u}_{i}|^{\mathbf{p}-2}\nabla \hat{u}_{i} \cdot \nabla\phi \, dx \ge 0.
\]
Then since $u_i$ have disjoint supports, from above we get
\[
\int_{\Omega}|\nabla u_{i}|^{\mathbf{p}-2}\nabla u_{i} \cdot \nabla\phi \, dx \le 0.
\]
This contradicts the fact that $ u_i$ is sub $p$ harmonic.

\end{proof}
\begin{remark}
Note that in Lemma \ref{uniqu5} we proved that the differential inequalities hold in the weak sense,  However, the weak and viscosity are equivalent.
\end{remark}
Next, we pass to the limit as $p$ tends to infinity and we obtain that the minimizer for large $p$ satisfies the following
differential inequalities
\begin{enumerate}
  \item $-\Delta_{\mathbf{\infty}}u_{i}\leq 0$
  \item $-\Delta_{\mathbf{ \infty}}\hat{u}_{i}\geq 0$
\end{enumerate}

 \section{Schemes }
 This section presents our algorithms for semi-supervised learning.

\subsection{Infinity Learning Schemes }

Our first method is called Infinity Laplace Learning as follows. We aim to   extend  labels from a discrete set
$ {\{x_i : i = 1, . . . ,l}\}$   to  rest of nodes    $ {\{x_{l+1}, x_{l+2}, \cdots, x_{n} \}}$. Let $k$ denote the number of classes. To indicate that a labeled sample $x_i$ belongs to the $j^{\textit{th}}$ class we write $\uu(x_i)=e_j$ where ${\{e_1, e_2, \cdots ,e_k}\}$ are standard basis for $\mathbb{R}^k$.  Let $\uu=(u_1,\cdots,u_k)$  We solve the following system :
\begin{equation}\label{eq:state13}
\left \{
\begin{array}{ll}
\Delta_{\infty}\uu(x_i) = 0   &  l+1\le i\le n,\\
  \uu(x_i)=y_i \in  {\{e_1, e_2, \cdots e_k}\} &   1\le i \le l,\ \\
  \end{array}
\right.
\end{equation}
Let $\uu=(u_1,\cdots,u_k)$ be a solution of \eqref{eq:state13}, the label of node $x_i\in \cX\setminus\Gamma$ is dictated by
 \begin{equation*}
\underset{j\in {\{1,\cdots, k}\}}{ \textrm{arg max}} {u_{j}(x_i)}.
  \end{equation*}
The System (\ref{eq:state13}) is uncoupled and the existence and uniqueness of solution $u_i$, $i=1, 2,\cdots k,$ follows from \cite{A12} and  $u_i$ is characterized by the fact that it is an absolutely minimal Lipschitz extension of labeling  $y_i$.

\subsection{Infinity Segregated  Learning Schemes }

By Lemma (\ref{uniqu5}) the minimizer satisfies the following differential inequalities
\begin{enumerate}
  \item $-\Delta_{\mathbf{\infty}}u_{i}\leq 0$,
  \item $-\Delta_{\mathbf{ \infty}}\hat{u}_{i}\geq 0$.
\end{enumerate}
Our scheme is based on a segregated system given by Lemma \ref{uniqu5}. The Lemma  \ref{uniqu5} states that each $u_i$ is infinity harmonic in its support, consequently sub-infinity harmonic in the domain,   and $\hat{u}_{i}$ is supper infinity harmonic.
By  using (\ref{L})    to solve infinity Laplacian operator   and  the facts that   $u_{i}(x) \ge 0$ and $u_{i}(x) \cdot u_{j}(x)=0$ and definition of $\hat{u}_{i}$ we obtain the following iterative scheme 
\begin{equation}\label{scheme_sys}
	\begin{cases} 
		u^{(m+1)}_{1}(x_i) =\max \left({u}^{\ast^{(m)}}_1(x_i) - \sum\limits_{p \neq 1}   {u}^{\ast^{(m)}}_p(x_i), \,  0\right),\;\;x_i\in \cX\setminus\Gamma,\\
		u^{(m+1)}_{2}(x_i) =\max \left({u}^{\ast^{(m)}}_{2}(x_i) - \sum\limits_{p \neq 2}  {u}^{*^{(m)}}_p(x_i), \,  0\right),\;\;x_i\in \cX\setminus\Gamma,\\
		\dots\dots\dots\dots\\
		u^{(m+1)}_{k}(x_i) =\max \left( {u}^{\ast^{(m)}}_k(x_i) - \sum\limits_{p \neq k}   {u}^{\ast^{(m)}}_p(x_i), \,  0\right),\;\;x_i\in \cX\setminus\Gamma,\\
		u_{i}(x_i) =\phi_{i}(x),\;\;x_i\in \Gamma,\; \mbox{for all}\; i=1,2,\dots,k.
	\end{cases}
\end{equation}

\section{Implementation}

Considering the results mentioned earlier about $p$-Laplace regularization for large 
$p$, which can lead to solutions that ignore the distribution of the dataset, employing a Siamese Neural Network (SNN) offers a more data-sensitive approach. 
\subsection{SNN\cite{koch2015siamese}} 
 We use SNN to construct graphs over datasets that can be employed to model relationships between data points based on their similarity in a learned feature space. In this setup, the Siamese network helps in learning a suitable similarity measure between data points, which can then be used to create edges in a graph.  The primary role of the SNN is to generate meaningful embeddings for the data points by learning a similarity function.
 
 Here we explain the process: Start with a dataset, where each data point can be represented as a vector (e.g., an image, a feature vector, or any other data type). Next pass pairs of data points through the SNN. The network processes both points and outputs embeddings for each. Since both branches of the network are identical and share the same weights, the embeddings reflect the relative similarity of the input pairs.

After the SNN generates embeddings for each pair of data points,   we use a distance function (like Distance, cosine similarity, etc.) to measure how similar or dissimilar they are. 

\begin{itemize}
    \item \textbf{Distance Metric:} For two data points $x_i$ and $x_j$, the output embeddings from the Siamese network $f(x_i)$ and $f(x_j)$ are compared using a distance function like:

    \[
    \text{Cosine similarity\cite{XIA201539}:} \quad \text{cosine\_similarity}(x_i, x_j) =  \frac{f(x_i) \cdot f(x_j)}{\|f(x_i)\| \, \|f(x_j)\|} 
    \]
    \[
    \text{Adjusted similarity:} \quad \text{sim}(x_i, x_j) = \frac{\text{cosine\_similarity}(x_i, x_j) + 1}{2}
    \]
    \[
    \text{Adjusted distance:} \quad d(x_i, x_j) = \frac{1}{\text{similarity}(x_i, x_j)} - 1
    \]
\end{itemize}
With the similarity scores or nearest neighbors identified, a graph $G = (V, E)$ can be constructed: 
 By training on pairs of data points, the SNN creates embeddings in a feature space where the similarity between data points is directly informed by their relationships in the dataset. Therefore, using an SNN helps ensure that the graph-based models retain important information about the dataset's inherent structure.  This makes SNN an ideal choice for constructing graphs that accurately reflect the data relationships in semi-supervised learning tasks.

\subsection{Results on Different Datasets}

In this subsection, we implement our scheme on some known data sets.  For visualization,

\begin{example}
In our first example, we consider a balanced Two-Moon dataset consisting of 2,000 points with a noise level of 0.15. As the noise increases, the class boundaries become more overlapping. Table  \ref{tab:single_row_table} presents a comparison of the average accuracy between the Infinity Laplacian (InfL), Infinity Segregated Learning (InfSL), and the Poisson scheme, for different numbers of initial labels per class.
 \begin{table}[htbp]
    \centering
    \caption{Two moon, balanced  case}
    \begin{tabular}{|c|c|c|c|c|c|}
        \hline
        \multicolumn{6}{|c|}{\textbf{Average overall accuracy   over 100 trials for two moon}} \\
        \hline
         number of labels per class   &  1 &  2 &  3 &  4 &  5  \\
        \hline
           InfSL    &   \textbf{.9745}   &  \textbf{.9830} &  \textbf{.9811}  &    \textbf{.9856}   &  \textbf{.9861} \\
        \hline
        InfL     &  .8365  &  .8538  & .8775  & .9063  &  .9319 \\
                \hline
        Poisson    &   .7809  & .8523  & .8769  & .9228         &  .9322      \\
         \hline
  \end{tabular}
    \label{tab:single_row_table}
    \end{table}

The results in Table \ref{tab:single_row_table} demonstrate the performance comparison of three algorithms—our proposed InfSL, InfL, and Poisson Learning—on the balanced Two-Moon dataset. InfSL consistently achieves the highest accuracy across all configurations. With just one label per class, InfSL achieves an accuracy of \textbf{97.45\%}, improving to \textbf{98.61\%} with five labels per class. This demonstrates the strength of InfSL in extracting relevant information from minimal labeled data.

\end{example}

\begin{example}
We implement our scheme to the Four-Moons dataset containing 2000 points and a noise level of $0.15$,  with overlapping regions that increase the difficulty for algorithms to classify the points correctly. Figures   \ref{fig:classification_result_1}-\ref{fig:classification_result_5}  show the classification results on the Four-Moons dataset using three methods: InfSL, InfL, and Poisson Learning. Each figure represents a different experiment with varying numbers of initial labels per class.       
 
{    \begin{figure}[htbp]
        \centering
        \includegraphics[width=0.8\textwidth]{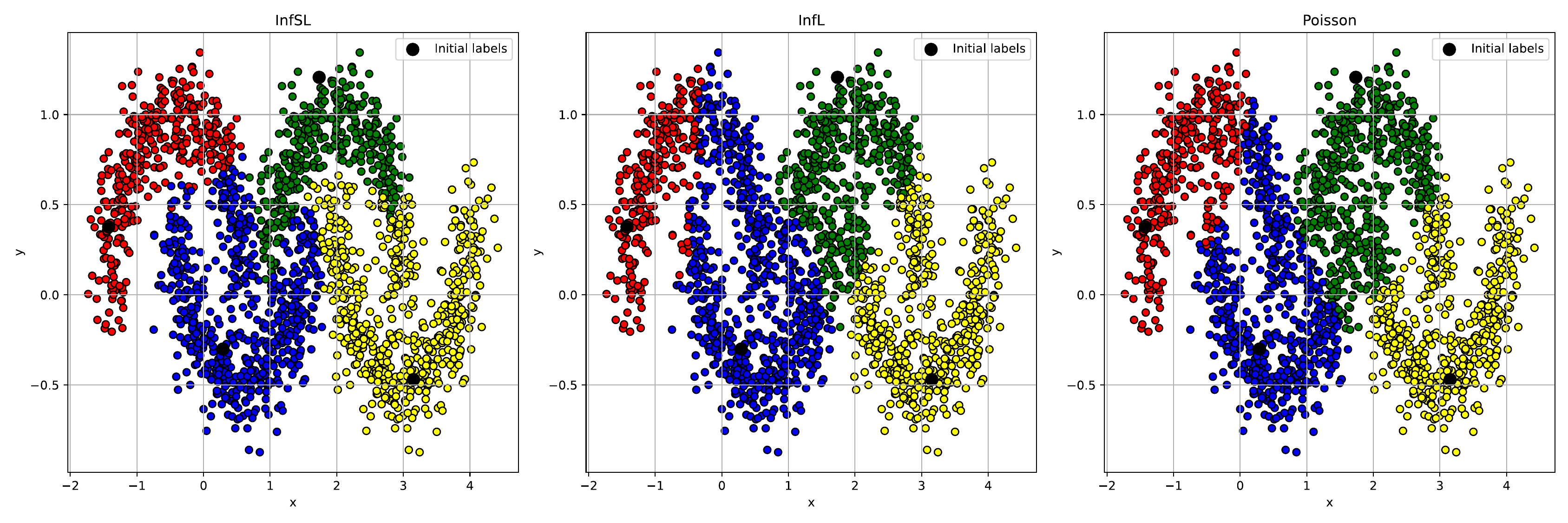}
        \caption{Classification results on \textit{4 Moons} with 1 label per class for InfSL, InfL, and Poisson Learning.}
        \label{fig:classification_result_1}
    \end{figure}    

    \begin{figure}[htbp]
        \centering
        \includegraphics[width=0.8\textwidth]{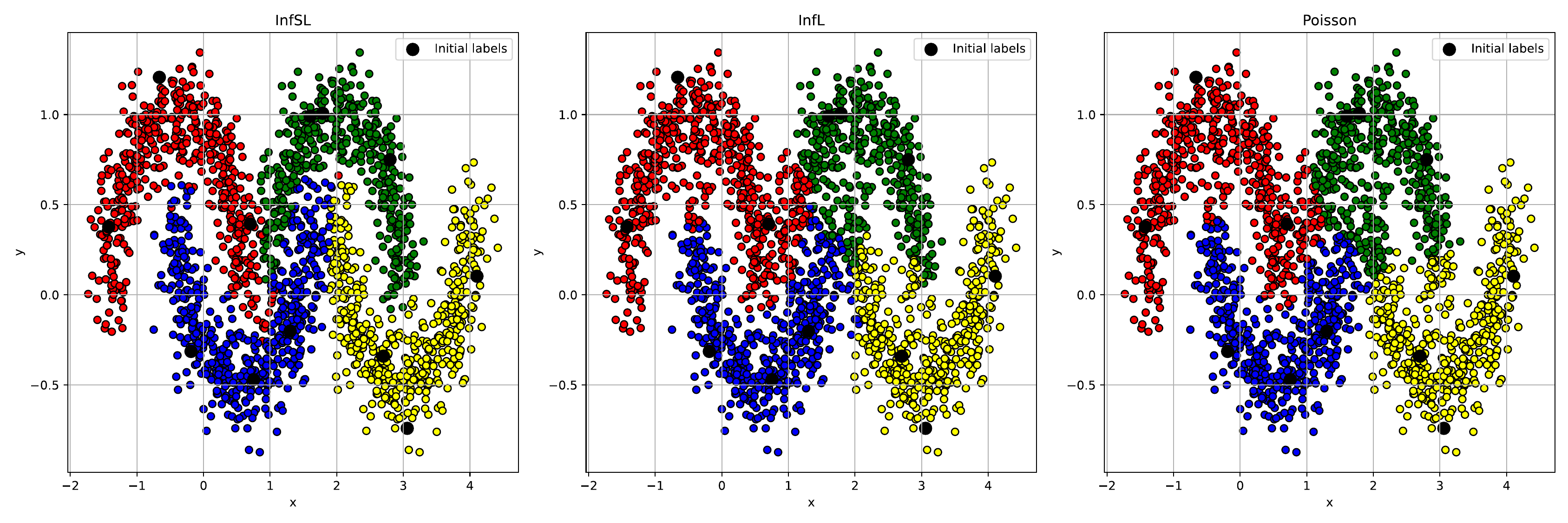}
        \caption{Classification results on \textit{4 Moons} with 3 labels per class for InfSL, InfL, and Poisson Learning.}
        \label{fig:classification_result_3}
    \end{figure}


    \begin{figure}[htbp]
        \centering
        \includegraphics[width=0.8\textwidth]{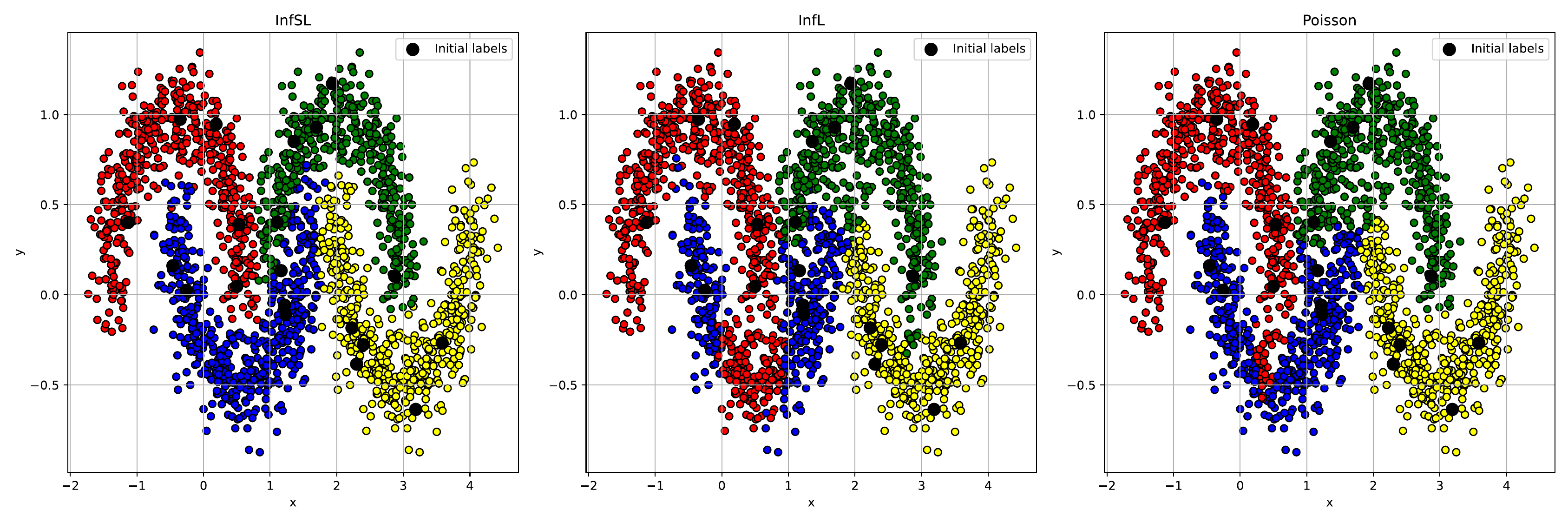}
        \caption{Classification results on \textit{4 Moons} with 5 labels per class for InfSL, InfL, and Poisson Learning.}
        \label{fig:classification_result_5}
    \end{figure}
    }
   In Table \ref{tab:four_moons_results}, we present a comparison of the performance of our proposed method, InfSL, along with InfL and Poisson Learning, on the  Four-Moons dataset. The table displays the average of accuracy for various numbers of initial labels per class.

    \begin{table}[htbp]
        \centering
        \caption{Average accuracy results for the \textit{4 Moons} dataset}
        \begin{tabular}{|c|c|c|c|c|c|}
            \hline
            \multicolumn{6}{|c|}{\textbf{Average overall accuracy over 100 trials for 4 Moons}} \\
            \hline
            Number of labels per class & 1 & 2 & 3 & 4 & 5 \\
            \hline
            InfSL & \textbf{.8215} & \textbf{.8855} & \textbf{.9278} & \textbf{.9367} & \textbf{.9441} \\
            \hline
            InfL & .6562 & .7368 & .7740 & .8226 & .8519 \\
            \hline
            Poisson & .6657 & .7403 & .7894 & .8059 & .8171 \\
            \hline
        \end{tabular}
        \label{tab:four_moons_results}
    \end{table}
The results presented in Table \ref{tab:four_moons_results}
 demonstrate the effectiveness of our proposed method, InfSL, on the more challenging Four Moons dataset. InfSL consistently outperforms both InfL and Poisson Learning across all label configurations. With only one label per class, InfSL achieves an accuracy of \textbf{82.15\%}, significantly surpassing both InfL (\textbf{65.62\%}) and Poisson Learning (\textbf{66.57\%}). As the number of labels per class increases, InfSL continues to outperform, reaching an accuracy of \textbf{94.41\%}  with five labels per class.
    
    These results demonstrate the strength of InfSL in handling complex, non-linear datasets like \textit{4 Moons}, particularly in scenarios with limited labeled data. The performance gap between InfSL and the other methods grows as the classification task becomes more challenging, emphasizing the robustness of InfSL for tasks requiring minimal supervision.

\end{example}

\begin{example}
We evaluated our proposed method on 10 imbalanced benchmark datasets from the KEEL repository, which vary in both the level of class imbalance and sample size \cite{A}, as summarized in Table \ref{Table:datasets}.  In Table. \ref{Table:datasets} the column denoted by  $n$ stands for the total number of samples, $p$ is the number of features, and    IR indicates the ratio between majority and Minority class samples in each data set. 
{\scriptsize{
\begin{table}[!htbp]
  \centering\footnotesize
  \caption{Information of selected imbalanced benchmark datasets}
    \begin{tabular}{l|cccr}    
\hline
\textbf{Dataset} & \textit{IR} & {$p$} & $n$ & \%Minority\\
\hline
    \textbf{ecoli1}	&	3.36	&	7	&	336	&	22.93	\\
    \textbf{ecoli2}	&	5.46	&	7	&	336	&	15.47	\\
    \textbf{ecoli3}	&	8.6	&	7	&	336	&	10.41	\\
    \textbf{shuttle-c0-vs-c4}	&	13.87	&	9	&	1829	&	6.72	\\
    \textbf{glass6}	&	6.38	&	9	&	214	&	13.55	\\
    \textbf{new-thyroid1}	&	5.14	&	5	&	215	&	16.28	\\
    \textbf{new-thyroid2}	&	5.14	&	5	&	215	&	16.28	\\
    \textbf{page-blocks0}	&	8.79	&	10	&	5472	&	10.21	\\
    \textbf{segment0}	&	6.02	&	19	&	2308	&	14.24	\\
    \textbf{vehicle0}	&	3.25	&	18	&	846	&	23.52	\\
    \textbf{vowel0}	&	9.98	&	13	&	988	&	9.10	\\
    \hline
    \end{tabular}%
    \label{Table:datasets}
\end{table}
}}
To evaluate the InfSl Algorithm, we use the metrics   \textit{${\rm F}_1$}-Score,   Recall,  Accuracy, and Precision for each class. To ensure consistency for all experiments, the data set is first shuffled for each benchmark. Subsequently, 1 percent of the samples are randomly chosen in accordance with the dataset's IR   as the labeled samples. This process is independently repeated 100 times, then the averages of the aforementioned metrics are computed. 

     Table \ref{4} outlines the detailed performance metrics, highlighting consistent improvements in accuracy, F1-score, recall, and precision across the imbalanced datasets. The results demonstrate that our method is particularly effective in handling extreme class imbalances, as seen with the "shuttle-c0-vs-c4" dataset.   
 {\scriptsize{
 \begin{table}[!htbp]
  \centering\footnotesize
  \caption{Information of selected imbalanced benchmark datasets}
  \begin{tabular}{|c|c|c|c|c|c|c|c|c|c|c|}

    \hline

     Algorithm &  \multicolumn{7}{|c|}{InfSL} \\ 

    \hline

    Dataset &  Accuracy & F1 min & F1 maj & Recall min & Recall maj & Precision min & Precision maj\\ %
    \hline
    ecoli1    &   $0.8836$   &   $0.6974$   &   $0.9274$   &   $0.8352$   &   $0.8996$   &   $0.6287$   &   $0.9593$ \\
    ecoli2    &   $0.9352$   &   $0.8168$   &   $0.9583$   &   $0.8518$   &   $0.9644$   &   $0.8115$   &   $0.9579$ \\
    ecoli3    &   $0.9556$   &   $0.7918$   &   $0.9749$   &   $0.8379$   &   $0.9743$   &   $0.7757$   &   $0.9765$ \\
    shuttle-c0-vs-c4    &   $0.9895$   &   $0.904$   &   $0.9944$   &   $1.0$   &   $0.989$   &   $0.8433$   &   $1.0$ \\
    glass6    &   $0.9693$   &   $0.9377$   &   $0.978$   &   $0.9492$   &   $0.994$   &   $0.9607$   &   $0.9706$ \\
    new-thyroid1    &   $0.9908$   &   $0.9725$   &   $0.9945$   &   $0.947$   &   $0.9999$   &   $0.9994$   &   $0.9891$ \\
    new-thyroid2    &   $0.9862$   &   $0.9593$   &   $0.9917$   &   $0.9218$   &   $1.0$   &   $1.0$   &   $0.9835$ \\
    page-blocks0    &   $0.966$   &   $0.7991$   &   $0.9814$   &   $0.9805$   &   $0.965$   &   $0.6806$   &   $0.9985$ \\
    segment0    &   $0.9652$   &   $0.8489$   &   $0.9803$   &   $0.9962$   &   $0.962$   &   $0.758$   &   $0.9996$ \\
    vehicle0    &   $0.9905$   &   $0.9801$   &   $0.9938$   &   $0.9709$   &   $0.9967$   &   $0.9894$   &   $0.9909$ \\
    vowel0    &   $0.981$   &   $0.8809$   &   $0.9897$   &   $1.0$   &   $0.9796$   &   $0.7916$   &   $1.0$ \\
    \hline
    \end{tabular}
 \label{4}
\end{table}
}}
\end{example}
 
\begin{example}   

Next, we evaluated the performance of our method and compared it with InfL and the Poisson scheme using the well-known \textit{MNIST} dataset. Table \ref{tab:mnist_results} presents the average accuracy results over 100 trials, with varying numbers of labeled points per class (ranging from 2 to 10).

\begin{table}[htbp]
    \centering
    \caption{Accuracy Results for the \textit{MNIST} Dataset}
    \resizebox{\textwidth}{!}{%
    \begin{tabular}{|c|c|c|c|c|c|c|c|c|c|}
        \hline
        \multicolumn{10}{|c|}{\textbf{Average overall accuracy over 100 trials for \textit{MNIST} Dataset}} \\
        \hline
        number of labels per class  &  2 &  3 &  4 &  5 & 6 & 7 & 8 & 9 & 10 \\
        \hline
         InfSL    &   \textbf{.994606}   &  \textbf{.994663} &  \textbf{.994672}  &    \textbf{.994678}   &  \textbf{.994677} & \textbf{.994682} & \textbf{.994689} &  \textbf{.994690} & \textbf{.994690} \\
        \hline
        InfL     &  .797549  &  .850493  & .880603  &  .899981 &  .913188  &    .924895 &    .928351   &   .934989    &    .939326 \\
         \hline
         Poisson     &  .781556  &  .849058  & .885421  &  .908756 &  .915682  &    .92546 &    .929523   &   .936487    &    .940658 \\
         \hline
    \end{tabular}%
    }
    \label{tab:mnist_results}
\end{table}


Table \ref{tab:mnist_results} provides a  comparison of the performance of InfSL, InfL, and Poisson Learning on the MNIST dataset. The rows display the average accuracy achieved for each method, while the columns indicate the number of labeled samples per class, ranging from 2 to 10.

 InfL and Poisson Learning start with lower accuracies, particularly with 2 labeled samples per class, but gradually improve as more labeled samples are provided. By 10 labeled samples per class, Poisson Learning slightly surpasses InfL, though both remain below the performance of InfSL. This demonstrates the advantage of the graph refinement process in InfSL, which effectively captures the underlying structure of the data with minimal supervision.

\end{example}

\begin{example}
Next, we evaluated the performance of our method and compared it with Infinity Learning (InfL) and the Poisson scheme using a real-world dataset collected in collaboration with medical professionals. This dataset comprises a total of 452 images, including 205 images of Koilocytotic—cells that exhibit specific morphological changes indicative of cervical cancer and 247 images of normal cells. Figure \ref{fig:cell_examples} shows representative examples of the two classes in the dataset, with 9 images for each class. The inclusion of these diverse images allows for a comprehensive evaluation of our method’s ability to differentiate between Koilocytotic and normal cells, which is critical for effective cervical cancer screening.

\begin{figure}[htbp]
  \centering
  \begin{minipage}{0.4\textwidth}
    \centering
    \includegraphics[width=0.26\textwidth, height=1.5cm]{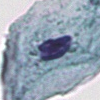}\quad
    \includegraphics[width=0.26\textwidth, height=1.5cm]{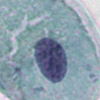}\quad
    \includegraphics[width=0.26\textwidth, height=1.5cm]{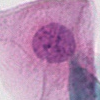}\\
    \includegraphics[width=0.26\textwidth, height=1.5cm]{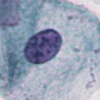}\quad
    \includegraphics[width=0.26\textwidth, height=1.5cm]{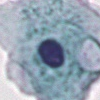}\quad
    \includegraphics[width=0.26\textwidth, height=1.5cm]{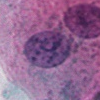}\\
    \includegraphics[width=0.26\textwidth, height=1.5cm]{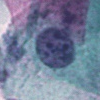}\quad
    \includegraphics[width=0.26\textwidth, height=1.5cm]{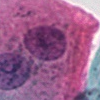}\quad
    \includegraphics[width=0.26\textwidth, height=1.5cm]{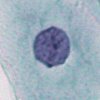}
    \caption{(a) Normal}
    \label{fig:normal_cells}
  \end{minipage}
  \hspace{0.1\textwidth}
  \begin{minipage}{0.4\textwidth}
    \centering
    \includegraphics[width=0.26\textwidth, height=1.5cm]{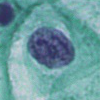}\quad
    \includegraphics[width=0.26\textwidth, height=1.5cm]{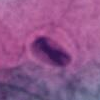}\quad
    \includegraphics[width=0.26\textwidth, height=1.5cm]{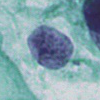}\\
    \includegraphics[width=0.26\textwidth, height=1.5cm]{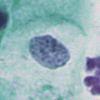}\quad
    \includegraphics[width=0.26\textwidth, height=1.5cm]{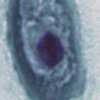}\quad
    \includegraphics[width=0.26\textwidth, height=1.5cm]{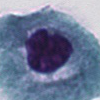}\\
    \includegraphics[width=0.26\textwidth, height=1.5cm]{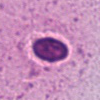}\quad
    \includegraphics[width=0.26\textwidth, height=1.5cm]{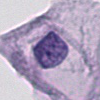}\quad
    \includegraphics[width=0.26\textwidth, height=1.5cm]{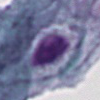}
    \caption{(b) Koilocytotic}
    \label{fig:koilocytotic_cells}
  \end{minipage}
  
  \caption{Representative examples of cells from the normal (a) and koilocytotic (b) classes in the real dataset.}
  \label{fig:cell_examples}
\end{figure}

Table \ref{tab:real_data_results} provides a comparison of the performance of InfSL, InfL, and Poisson Learning on the real dataset. The rows display the average accuracy achieved for each method, while the columns indicate the number of labeled samples per class, ranging from 2 to 7.

\begin{table}[htbp]
    \centering
    \caption{Accuracy Results for the \textit{Real} Dataset}
    \resizebox{\textwidth}{!}{%
    \begin{tabular}{|c|c|c|c|c|c|c|}
        \hline
        \multicolumn{7}{|c|}{\textbf{Average overall accuracy over 100 trials for the \textit{Real} Dataset}} \\
        \hline
        number of labels per class  &  2 &  3 &  4 &  5 & 6 & 7 \\
        \hline
         InfSL    &   .7435   &  .8044 &  .8707  &    .8955   &  .9188 & .9295  \\
        \hline
        InfL     &  \textbf{.7501}  &  \textbf{.8330}  & \textbf{.8840}  &  \textbf{.9100} &  \textbf{.9438}  &    \textbf{.9503} \\
         \hline
         Poisson     &  .7128  &  .7815  & .8041  &  .8088 &  .8138  &    .8227 \\
         \hline
    \end{tabular}%
    }
    \label{tab:real_data_results}
\end{table}




These results highlight the effectiveness of InfSL in medical image classification tasks, particularly for identifying Koilocytotic associated with cervical cancer. While InfL consistently outperforms InfSL, the latter still demonstrates strong capabilities, especially as the amount of labeled data increases. This underscores the potential of InfSL in practical applications for medical diagnostics, providing valuable support for detecting critical conditions like cervical cancer.

\end{example}

\section{Conclusion}
In this work, we have developed an efficient and robust graph-based semi-supervised learning framework leveraging the Infinity Laplacian operator. By extending spatial segregation theory to the graph-based Infinity Laplacian, the proposed Infinity Segregated Learning (InfSL) method demonstrates exceptional performance, particularly in handling imbalanced datasets and scenarios with sparse labeling.  We Compared various datasets to establish InfSL's effectiveness over existing methods like Poisson Learning and standard Infinity Laplace schemes, highlighting its superior classification accuracy and resilience to class imbalance. This study advances semi-supervised learning by offering a novel method that effectively balances label propagation with dataset structure. It is highly suitable for real-world applications in fields requiring minimal labeled data, such as medical diagnostics and image processing.


\bibliographystyle{abbrv}
\bibliography{references}

\end{document}